\newif\ifdraft
\newcommandx{\nt}[2][1=]{\todo[linecolor=red,
			backgroundcolor=red!10,bordercolor=red,#1]{ #2}}
\newcommandx{\jy}[2][1=]{\todo[linecolor=green,
			backgroundcolor=green!10,bordercolor=green,#1]{JY: #2}}
\newcommand{\nt}[1]{{}}
\newcommand{\jy}[1]{{}}
\newif\iftwocolumn
\newtheorem{theorem}{Theorem}[section]
\theoremstyle{definition}
\theoremstyle{remark}
\def\subsubsection{\@startsection{subsubsection}
                                 {3}
                                 {\z@ \hspace*{1mm}}
                                 {0ex plus 0.1ex minus 0.1ex}
                                 {0ex}
                                 {\normalfont\normalsize\itshape}}
\newcommand{\mpp}{\textsc{MRPP}\xspace}
\newcommand{\soc}{\textsc{SOC}\xspace}
\font\titlefont=ptmb at 14.8pt
\title{\titlefont
Targeted Parallelization of Conflict-Based Search for Multi-Robot Path Planning
}
\author{Teng Guo   \qquad Jingjin Yu
\thanks{G. Teng, and J. Yu are with the Department of 
Computer Science, Rutgers, the State University of New Jersey, Piscataway, NJ, USA. 
Emails: {\tt\small \{teng.guo, jingjin.yu\}@rutgers.edu}.
}
}
\begin{document}

\maketitle
\thispagestyle{empty}
\pagestyle{empty}

\ifdraft
\begin{picture}(0,0)%
\put(-12,105){
\framebox(505,40){\parbox{\dimexpr2\linewidth+\fboxsep-\fboxrule}{
\textcolor{blue}{
The file is formatted to look identical to the final compiled IEEE 
conference PDF, with additional margins added for making margin 
notes. Use $\backslash$todo$\{$...$\}$ for general side comments
and $\backslash$jy$\{$...$\}$ for JJ's comments. Set 
$\backslash$drafttrue to $\backslash$draftfalse to remove the 
formatting. 
}}}}
\end{picture}
\vspace*{-5mm}
\fi

\begin{abstract}
Multi-Robot Path Planning (\mpp) on graphs,  equivalently known as Multi-Agent PathFinding (MAPF), is a well-established NP-hard problem with critically important applications. 
As serial computation in (near)-optimally solving \mpp approaches the computation efficiency limit, parallelization offers a promising route to push the limit further, especially in handling hard or large \mpp instances. 
In this study, we initiated a \emph{targeted} parallelization effort to boost the performance of conflict-based search for \mpp. Specifically, when instances are relatively small but robots are densely packed with strong interactions, we apply a decentralized parallel algorithm that concurrently explores multiple branches that leads to markedly enhanced solution discovery.
On the other hand, when instances are large with sparse robot-robot interactions, we prioritize node expansion and conflict resolution. 
Our innovative multi-threaded approach to parallelizing bounded-suboptimal conflict search-based algorithms demonstrates significant improvements over baseline serial methods in success rate or runtime. 
Our contribution further pushes the understanding of \mpp and charts a promising path for elevating solution quality and computational efficiency through parallel algorithmic strategies.
%
\end{abstract}

\section{Introduction}\label{sec:intro}
The primary objective of Multi-Robot Path Planning (\mpp) is to determine a set of collision-free paths guiding many robots from a given start configuration to a specified goal configuration. Optimally solving \mpp is widely known as NP-hard~\cite{YuLav13AAAI, Sur10, Yu2015IntractabilityPlanar}.
Given the prevalence of this problem in real-world settings, developing effective algorithms for tackling \mpp is crucial to enable large-scale applications, including warehouse automation~\cite{WurDanMou08, guizzo2008three, Guo2024WellConnectedSA}, formation control~\cite{PodSuk04, SmiEgeHow08}, autonomous vehicles and garages~\cite{Guo2024DecentralizedLP, Guo2023TowardEP}, agriculture~\cite{cheein2013agricultural}, object transportation~\cite{RusDonJen95}, and swarm robotics~\cite{preiss2017crazyswarm}, among others.
Despite being an active research subject since the 1980s~\cite{KorMilSpi84, ErdLoz86, LavHut98b, GuoPar02}, \mpp continues to attract attention as even a few percentages of performance improvement can translate to significant competitive advantages in practice. Recent algorithms, e.g.,~\cite{YuLav16TRO, boyarski2015icbs, cohen2016improved}, balance between computational efficiency and solution optimality.
One notable category of algorithms is bounded-suboptimal search \cite{barer2014suboptimal, li2021eecbs}\jy{Add some references}, which enhances efficiency and scalability with bouned optimality guarantees. Most prior works in bounded suboptimal search are serial; in this work, we leverage parallelization to boost the performance of bounded suboptimal search for multi-robot path planning.

\begin{figure}[h]
    \centering
    \includegraphics[width=1\linewidth]{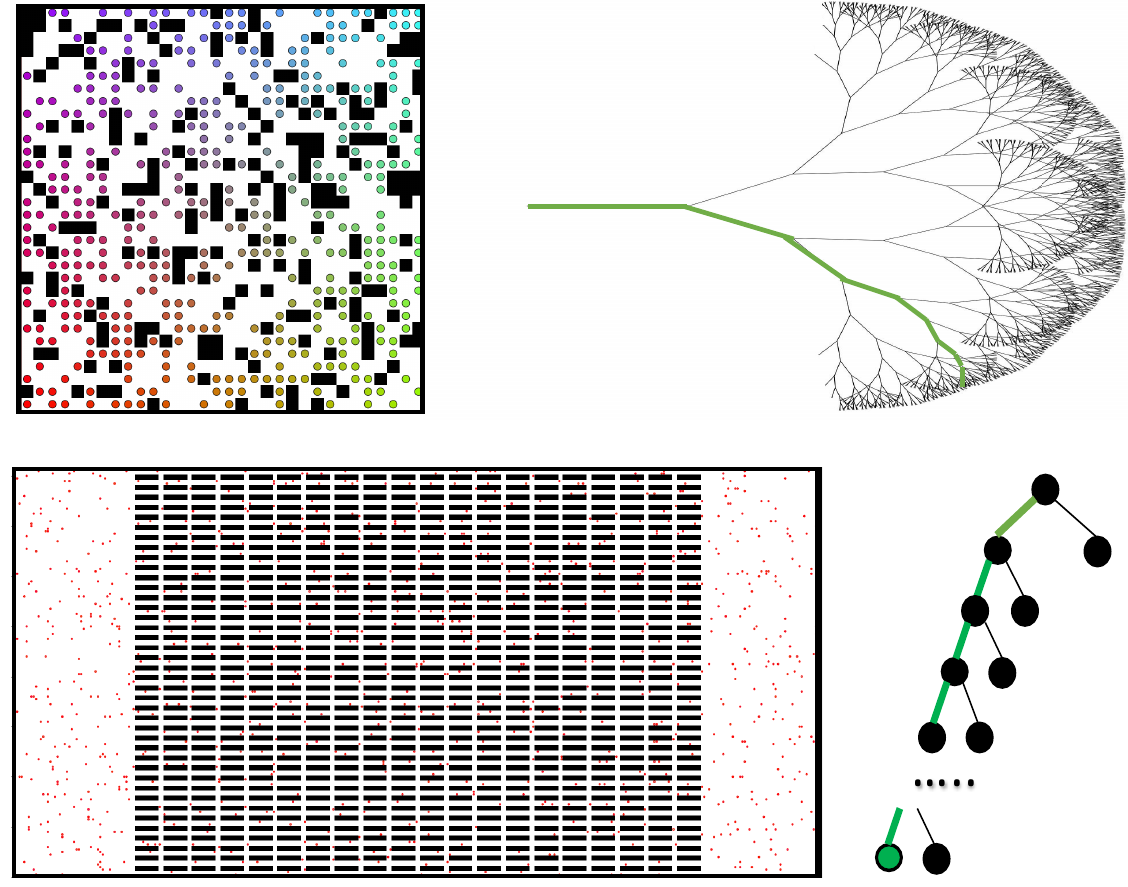}
    \put(-205, 95){(a)}
    \put(-80, 95){(b)}
    \put(-160, -5){(c)}
    \put(-35, -5){(d)}
    \caption{Types of \mpp instances and their illustrative search tree structure. (a) An example with strong robot-robot coupling.
(b) Solving strongly coupled instances demands traversing numerous branches through the search tree, as many branches lead to dead ends.
(c) An example with weak robot-robot coupling.
(d) Solving weakly coupled instances generally does not involve extensive branch exploration, as many branches lead to good quality feasible solutions.}
    \label{fig:introduction}
\end{figure}
Our parallelization effort adopts a targeted and focused approach. We categorize \mpp instances into two types: (1) strongly correlated dense instances and (2) large-scale weakly correlated instances. 
The first category involves relatively small maps with a moderate number of robots but high robot density (very large and dense settings are still currently behind reach), where density can be local or global. In such settings, robot-robot interactions are frequent, characterized by the substantial increase in node expansions required to find a bounded-suboptimal solution compared to the search tree depth. %
The challenge persists even when adopting a large suboptimality bound to encourage robots to address conflicts by taking detours.
Consequently, attempting to resolve conflicts by introducing detours through adding constraints often leads to new conflicts. 
The intricacy renders the entire conflict resolution process time-consuming. 
The second category involves large maps with thousands of robots at moderate robot densities. Robots exhibit weak interactions in these instances, providing numerous opportunities to resolve conflicts by incorporating minor detours. Consequently, the search process does not demand the exploration of numerous branches, and the number of node expansions closely aligns with the tree depth during the search tree expansion. 
\jy{A side note: strictly speaking, robots do not exhibit correlations. The planning processes for two robots are entangled. We should avoid saying ``robots are correlated''.}

We present distinct parallelization strategies tailored to address each category's distinct challenges. In instances characterized by strong robot-robot interactions, where the bottleneck lies in the exponentially growing number of node expansions, we introduce a novel decentralized parallel algorithm that simultaneously explores multiple branches with marked improvement in the likelihood of discovering a solution. Conversely, in the second category, where the number of node expansions is not the limiting factor, but the node expansion process is sluggish, we propose parallel techniques designed to accelerate the node expansion rate and the conflict resolution process.

We demonstrate that the parallel techniques we propose maintain the bounded-suboptimality and completeness guarantees of the original solver. Our experimental findings highlight that the parallelized version exhibits a success rate approximately $40\%$ higher than the serial version in small, strongly-coupled instances while achieving a speedup ranging from $2\times$ to $4\times$ in large weakly-coupled instances.
\jy{Add some highlight of the computational result to affirm the contributions?}

\textbf{Related Work.}
Whereas computing feasible solutions to Multi-Robot Path Planning (\mpp) can be quickly solved~\cite{KorMilSpi84}, optimally solving \mpp induces significant computation complexity~\cite{Sur10,YuLav13AAAI,Yu2015IntractabilityPlanar}, leading to a continuous effort seeking better algorithms. 
Optimal solvers fall into two primary categories: reduction-based solvers, including ILP \cite{YuLav16TRO}, SAT\cite{surynek2012towards}, and ASP\cite{erdem2013general}, and search-based solvers like ICTS\cite{sharon2013increasing}, CBS \cite{sharon2015conflict}, and ODrM*\cite{wagner2015subdimensional}. While optimal solvers offer exact solutions, they often exhibit limited scalability. Numerous heuristic improvements have been proposed to enhance the performance of optimal Conflict-Based Search (CBS), such as those found in \cite{boyarski2015icbs, li2019improved, li2019disjoint}. 
In addition to optimal algorithms, various near-optimal approaches have been introduced. Some of these are bounded-suboptimal variants of CBS, such as ECBS \cite{barer2014suboptimal} and EECBS \cite{li2021eecbs}. Other fast and near-optimal algorithms encompass WHCA* \cite{silver2005cooperative}, PIBT~\cite{okumura2019priority}, LACAM~\cite{okumura2023lacam}, DDM~\cite{han2019ddm}, Push and Swap \cite{luna2011push}, among others. Polynomial-time rule-based algorithms with constant-factor optimality guarantees are proposed to tackle extremely dense settings in \cite{GuoFenYu22IROS, guo2022sub, yu2018constant}.

Previous studies have introduced parallel A* search algorithms, such as PRA* \cite{evett1995massively} and HDA* \cite{kishimoto2009scalable}, which leverage multi-threading, as well as GPU-based parallel A* in \cite{zhou2015massively}.  Limited research has been conducted on parallelizing \mpp solution methods. Deep learning-based methods indirectly execute in parallel on GPUs \cite{li2020graph, sartoretti2019primal} but have yet to demonstrate superior optimality and success rates compared to classical search-based methods. They also are incomplete without falling back on classical methods. 
The approach presented in \cite{guo2021spatial, yu2016optimal} adopts a divide-and-conquer strategy to decompose the original problem into sub-problems concurrently solved on CPUs through multi-threading. Although this method enhances scalability, it comes at the cost of sacrificing the optimality guarantees provided by the original solver.
In the work by Lee et al. \cite{lee2021}, a parallel hierarchical composition conflict-based search is introduced for optimal solving of the \mpp. Their approach addresses subproblems that consider only a subset of robots. These subproblems are concurrently solved and hierarchically combined into larger subproblems until the original \mpp problem is restructured. The method is, however, effective only for a moderate number of robots and is less suitable for larger instances characterized by high coupling.
In this paper, we investigate multi-threading-based methods for parallelizing bounded-suboptimal conflict search-based algorithms and demonstrate that these parallel algorithms significantly outperform baseline sequential algorithms.

\textbf{Organization.}
The rest of the paper is organized as follows. 
Sec.~\ref{sec:problem} covers the preliminaries, including the problem formulation and introduction to the bounded-suboptimal algorithms.
In Sec.~\ref{sec:algorithm}, we introduce the parallelization techniques for conflict-based search algorithms.
In Sec.~\ref{sec:evaluation}, we conduct evaluations of the proposed methods on small-scale strongly correlated instances and large-scale weakly correlated instances.
We conclude in Sec.~\ref{sec:conclusion}.

%


\section{Preliminaries}\label{sec:problem}
\subsection{Multi-Robot Path Planning on Graphs (\mpp)}
Consider a grid graph $G=(V, E)$ with width $w$ and height $h$, the vertex set is $V \subseteq \{(i, j) \mid 1 \leq i \leq w, 1 \leq j \leq h, i, j \in \mathbb{Z}\}$. The graph is 4-way connected, meaning that for a vertex $v = (i, j)$, its neighbors in $G$ are $\mathcal{N}(v) = \{(i + 1, j), (i - 1, j), (i, j + 1), (i, j - 1)\} \cap V$.
The problem involves $n$ robots, denoted as $a_1, \dots, a_n$. Each robot, $a_i$, possesses a unique starting state $s_i \in V$ and a unique goal state $g_i \in V$. The joint start configuration is represented as $\mathcal{S} = \{s_1, \dots, s_n\}$, and the goal configuration is $\mathcal{G} = \{g_1, \dots, g_n\}$.
The primary objective of \mpp is to find a set of feasible paths for all robots. A path for robot $a_i$ is defined as a sequence of $T + 1$ vertices $P_i = (p_i^0, \dots, p_i^T)$, which must satisfy the following conditions:
(i) $p_i^0 = s_i$,
(ii) $p_i^T = g_i$, and
(iii) $\forall 1 \leq t \leq T, p_i^{t - 1} \in \mathcal{N} (p_i^t)$.
In addition to ensuring the feasibility of individual paths, for the paths to be collision-free, the following conditions must be met for all time steps $t$ and all robot pairs $a_i$ and $a_j$:
(i) $p_i^t \neq p_j^t$ (no vertex conflicts),
(ii) $(p_i^{t - 1}, p_i^t) \neq (p_j^t, p_j^{t - 1})$ (no edge conflicts).
Often, two optimization objectives are considered for minimization:
(i) Makespan: $\max_{i} \text{len}(p_i)$;
(ii) Sum-of-costs (\soc): $\sum_{i} \text{len}(p_i)$.
This study focuses exclusively on minimizing the Sum of Costs (\soc) and seeking bounded-suboptimal solutions.

\subsection{Enhanced Conflict-Based Search}
We work on parallelizing Enhanced Conflict-Based Search (ECBS) \cite{barer2014suboptimal} (a variant of Conflict-Based Search \cite{sharon2015conflict}\jy{Add a reference}) given its simplicity, good scalability, and lower memory usage. Further reasons supporting this choice are detailed in Sec.~\ref{sec:evaluation}. ECBS is a two-level, bounded-suboptimal search-based algorithm for solving \mpp. We briefly describe the important and relevant characteristics of ECBS, which systematically explores potential paths while efficiently resolving conflicts that may arise by adding constraints.
\begin{itemize}[leftmargin=*]
    \item \textbf{Initialization}: ECBS begins with an empty \emph{open list} (a priority queue for maintaining the optimality bound) and an empty \emph{focal list} (a priority queue storing nodes from the open list whose optimality lower bound is no more than $w \cdot SOC_{LB}$, where $w$ is the suboptimality bound and $SOC_{LB}$ smallest cost in the open list). The initial high-level node, whose paths are computed without considering robot-robot conflicts and constraints, is initialized as empty and pushed into both lists. The open list prioritizes the high-level node whose paths have a smaller cost, while the focal list prioritizes the high-level node whose paths contain fewer conflicts.

    \item \textbf{High-level Exploration}: ECBS repeatedly pops a node from the focal list and checks for conflict. If no conflict exists, ECBS terminates and returns the solution. Otherwise, ECBS resolves conflicts by generating two child nodes with additional constraints. For example, if there is a vertex conflict $(a_1, a_2, v, t)$, representing robots $a_1$ and $a_2$ meeting at vertex $v$ at timestep $t$, ECBS generates two successors that are initially copies of the current node. New constraints are added for each successor to address the conflict. One with the constraint $\langle a_1, v, t\rangle$, indicating that robot $a_1$ is forbidden to enter vertex $v$ at timestep $t$, and another one with the constraint $\langle a_2, v, t\rangle$. The low-level planners are then triggered to replan the path that satisfies the constraints for each child node.
    
    \item \textbf{Low-level Replanning}: At the low level, ECBS uses state-space focal search \cite{pearl1982studies} to replan paths for conflicting robots while satisfying constraints of the high-level node.
\end{itemize}



\section{Methods}\label{sec:algorithm}
We describe in Sec.~\ref{subsec:31} our ECBS parallelization strategy for strong robot-robot interactions and in Sec.~\ref{subsec:32} our ECBS parallelization strategy when robot-robot interactions are less intense. 

\subsection{Decentralizing High-Level Node  Expansion for Strong Robot-Robot Interactions}\label{subsec:31}
When robot densities are very high, with the corresponding map relatively small, the low-level pathfinding and expansion of high-level nodes require minimum computation cycles, but the strong interactions lead to exponential growth of the number of high-level nodes required to expand to find a feasible solution, as the number of robots increases. 
When dealing with strongly correlated robots, finding highly accurate heuristics to reduce the number of high-level node expansions further becomes challenging, and search-based algorithms are required to explore a tremendous number of branches to obtain the final solution. In such cases, leveraging the computational power of multi-core resources can contribute to significant performance enhancement.

CBS/ECBS are best-first search algorithms. Previous studies, e.g., \cite{fukunaga2018parallel}, demonstrated that decentralized parallel best-first search can outperform centralized parallel search. 
When node expansion is not a bottleneck, frequent locking and unlocking of the shared open list is needed, leading to significant synchronization overhead. As a result, parallelization with a central lock can be slower than its serial version. In such cases, decentralized parallelization strategies have advantages. 
On the other hand, a decentralized best-first search, where each thread maintains an individual open list, may encounter challenges such as duplicate node expansion, optimality insurance, and load-balancing issues. 
Searching over the high-level search tree\jy{What is this?} in high-density settings will never encounter duplicate nodes, and there is no need to re-expand a node. Furthermore, we do not seek the optimal solution; all threads terminate after finding a feasible solution. Due to these reasons, for high-density settings, we propose a decentralized parallel conflict-based search framework that leverages multi-core CPUs to expand multiple nodes simultaneously. 

\begin{algorithm}
\DontPrintSemicolon
\SetKwProg{Fn}{Function}{:}{}
\SetKw{Continue}{continue}
  \Fn{\textsc{ParallelECBS}({$\mathcal{S},\mathcal{G}$})}{
 \caption{Decentralized Parallel ECBS (DP-ECBS)\label{alg:decebs}}
 Initialize $\{\text{OPEN}_i\}$, $\{\text{FOCAL}_i\}$, $\{\text{BUFFER}_i\}$\;
 $n_0\leftarrow \texttt{InitialNode}$()\;
 $\text{incumbent}\leftarrow \text{None}$\;
 push $n_0$ to $\text{OPEN}_0$\;
 parallel execute $\texttt{ThreadECBS}(i)$\;
\Return incumbent;
}
\vspace{2mm}
\Fn{\textsc{ThreadECBS}(i)}{
\While{true}{
\If{incumbent$\neq$ None}{
    break\;
}
$\texttt{Pull}(\text{BUFFER}_i, \text{OPEN}_i)$\;
\If{$ \text{OPEN}_i=\emptyset$}{ continue\;}
$SOC_{LB}\leftarrow \texttt{getSocLB()}$\;
$\text{FOCAL}_i\leftarrow \texttt{getFocal}(i)$\;
$n\leftarrow \text{FOCAL}_i.pop()$\;
\If{$n.conflicts=\emptyset$}{
    incumbent$\leftarrow n$\;
    break\;
}
$C\leftarrow \texttt{Expand}(n)$\;
\ForEach{$c\in C$}{
$j\leftarrow \texttt{DestinateThread}(c)$\;
$\texttt{Send}(\text{BUFFER}_j,c)$\;
}

}
}
\end{algorithm}

Alg.\ref{alg:decebs}, denoted as \emph{Decentralized Parallel ECBS} (DP-ECBS), presents a decentralized parallelized version of ECBS. Each thread $i$ maintains its dedicated local open list (\text{OPEN}$_i$), focal list (\text{FOCAL}$_i$), buffer, and conflict avoidance table (CAT). The algorithm commences by generating the initial node $n_0$ through a sequential single-robot focal search, disregarding inter-robot conflicts. The incumbent solution is set to \texttt{None}, and $n_0$ is pushed onto the open list of the first thread (\text{OPEN}$_0$).
Simultaneously, the \texttt{ThreadECBS} function is invoked in parallel, executing a loop until an incumbent solution is found. If such a solution exists, the loop terminates. Within each iteration, each thread checks its buffer for incoming nodes, pulling and pushing them to its local open list with locking. The local lower bound of the sum of costs is continuously recorded by examining the first node on the open list.
The global lower bound on the sum of costs ($SOC_{LB}$) is calculated as the minimum value among the local lower bounds of all threads, performed atomically to minimize synchronization overheads. The FOCAL list (\text{FOCAL}$_i$) is then updated to include nodes from \text{OPEN}$_i$ with lower bounds not exceeding a threshold ($w \cdot SOC_{LB}$).
A node $n$ is selected from the FOCAL list, removed, and if conflict-free, becomes the incumbent solution, terminating the algorithm. Otherwise, $n$ is expanded to generate a set of child nodes $C$. For each child node $c$ in $C$, the algorithm determines the thread index $j$ using \texttt{DestinateThread} and sends the child node to the corresponding thread's buffer with the thread index being $j$.
While HDA*~\cite{fukunaga2018parallel} employs the Zorbist hash function \cite{zobrist1990new} for load balancing, defining a hash function for high-level nodes in our case is challenging due to the vast search space. Two methods for distributing search nodes to worker threads were tested. In the first, nodes are sent to a \emph{random} thread for statistical load balancing. In the second, for two child nodes generated in thread $j$, one remains in thread $j$, and the other is sent to thread $(j+1) \mod N_p$ where $N_p$ is the number of threads. This \emph{deterministic} approach ensures that each thread receives at least one incoming child in the next iteration.
\jy{This paragraph is too long - please break it into 2-3.}

We only consider solvable \mpp instances (a fast polynomial-time algorithm~\cite{KorMilSpi84} can readily check feasibility). As a result, Alg.~\ref{alg:decebs} does not provide a termination function for the no-solution case.
We now briefly prove that the DP-ECBS remains complete and bounded suboptimal.
\begin{theorem}
  DP-ECBS is complete and $w$-suboptimal.
\end{theorem}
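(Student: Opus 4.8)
The plan is to prove the two properties separately, in each case reducing the claim to the invariant that already makes serial \ecbs complete and $w$-suboptimal, and then verifying that the decentralized bookkeeping of Alg.~\ref{alg:decebs} preserves that invariant. Write $C^*$ for the optimal \soc of the (solvable) instance, and recall the admissibility property of \ecbs: for any high-level node $N$ the stored lower bound $\mathrm{LB}(N)$ never exceeds the cost of any conflict-free solution reachable in the subtree rooted at $N$, and $\mathrm{LB}$ is non-decreasing from a parent to its children (adding a constraint can only raise a low-level optimal cost). In particular $\mathrm{LB}(N)\le C^*$ whenever $N$ lies on the root-to-goal path of an optimal solution.

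For $w$-suboptimality, the key step is to show that the invariant $SOC_{LB}\le C^*$ holds at every point where \texttt{getSocLB} is evaluated. I would argue this with a frontier argument: fix an optimal goal node $n^*$ and its root-to-leaf path; at any instant let $n_j$ be the deepest node of this path that has been generated. Since the root starts in $\mathrm{OPEN}_0$ and expanding a node always generates its children, $n_j$ is ``alive'' (it resides in some thread's open list or buffer and has not yet been expanded), and by admissibility $\mathrm{LB}(n_j)\le C^*$. Because $SOC_{LB}$ is the minimum of the local lower bounds over all threads, $SOC_{LB}\le \mathrm{LB}(n_j)\le C^*$ whenever $n_j$ is resident in an open list. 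Granting this invariant, when a thread removes a conflict-free node $n$ from its focal list the focal membership test gives $\mathrm{LB}(n)\le w\cdot SOC_{LB}\le w\cdot C^*$; combining this with the low-level $w$-suboptimality exactly as in serial \ecbs yields $\mathrm{cost}(n)\le w\cdot C^*$, so the returned incumbent is $w$-suboptimal.

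For completeness, I would first note that on a finite graph every relevant high-level node has cost at most $w\cdot C^*$, hence bounded path lengths, so only finitely many nodes can ever be expanded; moreover, as observed in the text, the high-level tree contains no duplicates and no node is re-expanded. It then suffices to show that every generated node is eventually expanded or becomes the incumbent. This follows because \texttt{DestinateThread} assigns each child to a definite thread and \texttt{Send}/\texttt{Pull} deliver it into that thread's open list under locking, so no node is dropped; with finitely many nodes and continual progress, no deadlock or idle livelock can persist while an unexpanded node remains, for either the random or the deterministic distribution rule. Since $n^*$ is reachable from the root and its ancestors are never pruned, a conflict-free node is eventually selected from some focal list and the algorithm terminates with a solution.

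The main obstacle is making the invariant $SOC_{LB}\le C^*$ rigorous in the asynchronous setting. The delicate case is precisely when the deepest generated ancestor $n_j$ of the optimal path is \emph{in transit}---already sent to a buffer but not yet pulled into any $\mathrm{OPEN}_i$---so that the minimum taken over the open lists momentarily excludes the single node that certifies the bound, and a concurrently computed $SOC_{LB}$ could in principle exceed $C^*$. I would close this gap by strengthening the invariant to range over buffered nodes as well, or by arguing that \texttt{Pull} precedes \texttt{getSocLB} within the same iteration so the certifying ancestor is always resident when the bound is read, together with the monotonicity $\mathrm{LB}(\text{child})\ge\mathrm{LB}(\text{parent})$ to control the atomic read of the global minimum across threads. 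Handling this interleaving carefully---rather than the high-level structure of the argument---is where the real work lies.
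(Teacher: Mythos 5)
Your argument is essentially the paper's: reduce $w$-suboptimality to the invariant that the asynchronously computed $SOC_{LB}$ is a genuine global lower bound, and handle the threat from in-transit nodes via the monotonicity $\mathrm{LB}(\text{child})\ge\mathrm{LB}(\text{parent})$ (your completeness argument is a more explicit version of the paper's one-line ``no node is ever missed''). The interleaving gap you flag at the end is precisely the case the paper's proof addresses, and it closes it along the lines of your first suggested fix: when thread $i$ reads the global minimum while a node sits unread in some buffer, the sending thread $j$ has not yet refreshed its local lower bound, so that stale local bound still does not exceed the buffered node's $\mathrm{LB}$, and the computed $SOC_{LB}$ remains valid.
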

\begin{proof}
  By construction, DP-ECBS does not miss any potential search node of the full search tree. As a direct result, DP-ECBS is complete. To confirm $w$-suboptimality, it suffices to show that the $SOC_{LB}$ obtained by $\texttt{getSocLB}$ is indeed the global lower bound. 
  During the evaluation of $SOC_{LB}$ by a thread $i$, there may be incoming nodes en route generated by other threads. However, the sum of costs in conflict-based search monotonically increases. In other words, the SOC of generated children cannot be less than the SOC of their parent, which is the local lower bound of the corresponding thread. Consequently, the incoming nodes do not have a smaller value than $SOC_{LB}$. It is possible that when a thread $i$ unlocks its buffer and evaluates the global lower bound, another thread $j$ pushes a node to the buffer. The node in the buffer is not considered when $i$ evaluates the global lower bound.  However, thread $j$ has not updated its local lower bound yet when thread $i$ is evaluating the global lower bound, because thread $j$ is behind thread $i$. As a result,  the cost lower bound at the node in the buffer cannot be smaller than $SOC_{LB}$ obtained by $\texttt{getSocLB}.$
\end{proof}

\subsection{Optimizing Bypass and Conflict Counting through Parallel Acceleration for Low to Moderate Robot Density Settings}\label{subsec:32}
When robot densities are low to moderate, but the number of robots is large, search-based algorithms may require more iterations, and each iteration can be potentially slow. At the same time, as illustrated in Fig.~\ref{fig:introduction}, search tree exploration needs not span as many branches to locate a solution as in the strong interaction setting. Consequently, concurrent expansion of multiple nodes is ineffective. While a naive approach can be applied to work with the two children who come from a conflict resolution, such an approach utilizes only two threads and fails to harness the power of multi-core processing fully. We propose an alternative parallelization algorithm, as shown in Alg.~\ref{alg:pbp}, to accelerate the search process by reducing the number of iterations and enhancing the node expansion rate.

\begin{algorithm}
\DontPrintSemicolon
\SetKwProg{Fn}{Function}{:}{}
\SetKw{Continue}{continue}
  \Fn{\textsc{ParallelBP}({$N$})}{
 \caption{Parallel Bypass-augmented ECBS (PB-ECBS)\label{alg:pbp}}
\While{$|N.conflicts|\geq \alpha$ or $\text{still drops}$}{
$\mathcal{C}\leftarrow \texttt{findDisjointConflicts(N)}$\;
execute in parallel \ForEach{$c \in \mathcal{C}$}{ $\texttt{ByPass}(c)$}\;
execute in parallel $N.conflicts\leftarrow \texttt{findConflictsWithHashMap(N)}$\;
$\texttt{updateCAT}(N.paths)$\;
}
}
\end{algorithm}

\textbf{Parallel Bypass:}
The Bypass technique (BP) \cite{boyrasky2015don} is a tool for enhancing performance in conflict-based search. When a node $N$ is validated, and a conflict $\langle a_i, a_j, v, t\rangle$ is identified, BP seeks an alternative path for either conflicting robot $a_i$ or $a_j$. For an alternative path to qualify as a bypass for bounded-suboptimal solvers, its cost must fall within the suboptimality bound without including the conflict. If a conflicting robot $a_i$ in node $N$ has a valid bypass $P_i$, the node is not split into child nodes. Instead, the solution path $P_i$ within $N.solution$ is replaced with the valid bypass path, and the node is re-evaluated in the subsequent iteration without tree expansion.

BP effectively avoids splitting almost half of the nodes, particularly when the suboptimality ratio is not tight. However, in scenarios with many robots, the time required to bypass conflicts can still be substantial. To harness the power of multi-core processing and expedite the process, each iteration of the bypass routine for node $N$ involves selecting a disjoint conflict set $\mathcal{C}$ from $N.conflicts$, where $\forall c_1, c_2\in \mathcal{C}$, the set $\{c_1.a_1, c_1.a_2\}\cap \{c_2.a_1, c_2.a_2\} = \emptyset$. The involvement of different, non-overlapping robots in each conflict within this set allows for parallel processing of disjoint conflicts using BP through multi-threading. During each iteration of parallel BP processing, the conflict-avoidance table for low-level replanning is not updated, and all threads share the same conflict-avoidance table. While this may result in a less accurate conflict heuristic for low-level planning in general, potentially leading to a smaller reduction of the number of conflicts, for large and sparse instances, the replanned path is not significantly altered from the previous one. Thus, the impact would not be significant. Moreover, sharing the same conflict-avoidance table without modification saves time. The parallel BP continues to address disjoint conflicts until the number of conflicts stabilizes or becomes small enough (i.e., less than the number of threads or a predefined constant number $\alpha$). Subsequently, the algorithm returns to the regular search routine to handle any remaining conflicts until the final solution is found. The conflict avoidance table is updated only after addressing these disjoint conflicts.

\textbf{Parallel Conflict-Counting:}
Conflict counting becomes more time-consuming as the number of robots increases. The brute-force method, involving iterating through each pair of robots for every timestep, comes with a computational complexity of $O(n^2T)$, where $T$ represents the makespan. In the context of the EECBS\cite{li2021eecbs}\jy{Add a reference}, a more efficient dynamic programming approach is employed to count conflicts. Since only one robot is replanned during node expansion, the conflicts involving this replanned robot are the only ones that need re-evaluation, resulting in a computational requirement of $O(nT)$. We adopted this strategy in our ECBS search routine.
Following each iteration of parallel BP, where multiple robots are involved, conflicts for almost all robots must be re-evaluated. Instead of relying on the brute-force method, a hashmap is utilized to check if two robots occupy the same state at each step, reducing the computational complexity to $O(nT)$. To harness multi-core processing power, the time span is segmented into $k$ time windows for $k$ threads. Each thread checks conflicts using the hashmap within a specific time window, distributing the computational load efficiently.

We call the algorithm \emph{Parallel Bypass-augmented ECBS} (PB-ECBS). 
PB-ECBS maintain bounded-suboptimality and completeness, the proof of which is straightforward and is omitted for brevity.

\section{Evaluation}\label{sec:evaluation}
We evaluate the performance of DP-ECBS and PB-ECBS with pure C++ implementations.
%
%
All experiments are conducted on an 8-core 16-thread Intel\textsuperscript{\textregistered} Core\textsuperscript{TM} i7-6900K CPU operating at 3.2GHz under Ubuntu 18.04LTS.\footnote{The implementations of parallel ECBS and (serial) ECBS are based on \url{https://github.com/Kei18/mapf-IR/tree/public/mapf}. For EECBS, we use the code from \href{https://github.com/Jiaoyang-Li/EECBS.git}{https://github.com/Jiaoyang-Li/EECBS.git}. Our code will be available on  \url{https://github.com/GreatenAnoymous/ParallelMrpp1}}

\subsection{Performance of DP-ECBS on Strongly Coupled Settings}
In this section, we compare DP-ECBS and PB-ECBS to ECBS~\cite{barer2014suboptimal}, EECBS~\cite{li2021eecbs} and LaCAM~\cite{okumura2023lacam}. All CBS-based algorithms use dynamic programming to store conflicts in the high-level node to avoid redundant computation and employ collision avoidance tables to efficiently evaluate the number of conflict heuristics.

For ECBS and DP-ECBS, vanilla versions are used without any improved reasoning or heuristics. BP is also not used in this section. For EECBS, all available reasoning and heuristics are enabled. The suboptimality bound $w$ is set to 2, as solving a strongly correlated problem using a small suboptimality bound would require much more time. The time limit for each algorithm is set to 120 seconds. For DP-ECBS, 16 threads are utilized, which is the maximum number of concurrent threads for the CPU. Three medium-sized maps with diverse structures are evaluated: orz201d, random-32-32-20, and a $60\times 60$ empty map. For orz201d and random-32-32-20 \cite{stern2019mapf}, starts and goals are uniformly randomly generated. For the $60\times 60$ empty map, robots are constrained to the lower-left corner square area in both start and goal configurations, simulating a constrained multi-robot rearrangement scenario \cite{Guo2023EfficientHF}. We generate 50 instances for each map and evaluate the average computation time, success rate, and SOC suboptimality. If an algorithm cannot solve an instance within 120 seconds, it is considered a failure, and the runtime is counted as 120s. Failed instances will not be considered when evaluating the suboptimality ratio.

The results are presented in Fig.~\ref{fig:orz201d}-\ref{fig:cornerdense}.
LaCAM achieves great success rates and the lowest runtime at the cost of larger suboptimality, especially in very dense scenarios.
DP-ECBS-1 and DP-ECBS-2 represent DP-ECBS using random and deterministic work distribution strategies, respectively. EECBS cannot solve any generated instance in the given range of number of robots on random-32-32-20 within 120 seconds and is not shown in Fig.~\ref{fig:random32}. We observe that both DP-ECBS versions outperform ECBS and EECBS in all scenarios regarding success rate and runtime without resorting to any additional heuristics. On orz201d and in the multi-robot rearrangement scenario, EECBS, with reasoning and improved heuristics enabled, has better success rates than ECBS with fewer robots but is outperformed by ECBS as robot density grows.
\jy{I updated ECBSP1/ECBSP2 to DP-ECBS-1/DP-ECBS-2, please update the figures accordingly. I will look at the figures after your updates.}

The evaluation suggests that when robots are strongly coupled, the reasoning and improved heuristics (in serial methods) that work very well in moderately dense scenarios and conservative suboptimality bounds do not properly capture the interaction among robots. As a result, they do not significantly reduce the number of nodes and branches explored and slow down the node expansion rate. We note that the suboptimality ratio obtained by EECBS is larger than that of DP-ECBS and ECBS. 
EECBS is expected to surpass ECBS when tight suboptimality bounds are enforced; our proposed parallel strategy also applies to EECBS, which we plan to explore in future research. By using multiple concurrent threads to explore different branches and nodes simultaneously, DP-ECBPS methods achieve a much higher success rate than ECBS and EECBS, with a suboptimality ratio close to ECBS. Experiments show random and deterministic work distribution strategies have minor differences.
\begin{figure}[h!]
    \centering
    \includegraphics[width=1\linewidth]{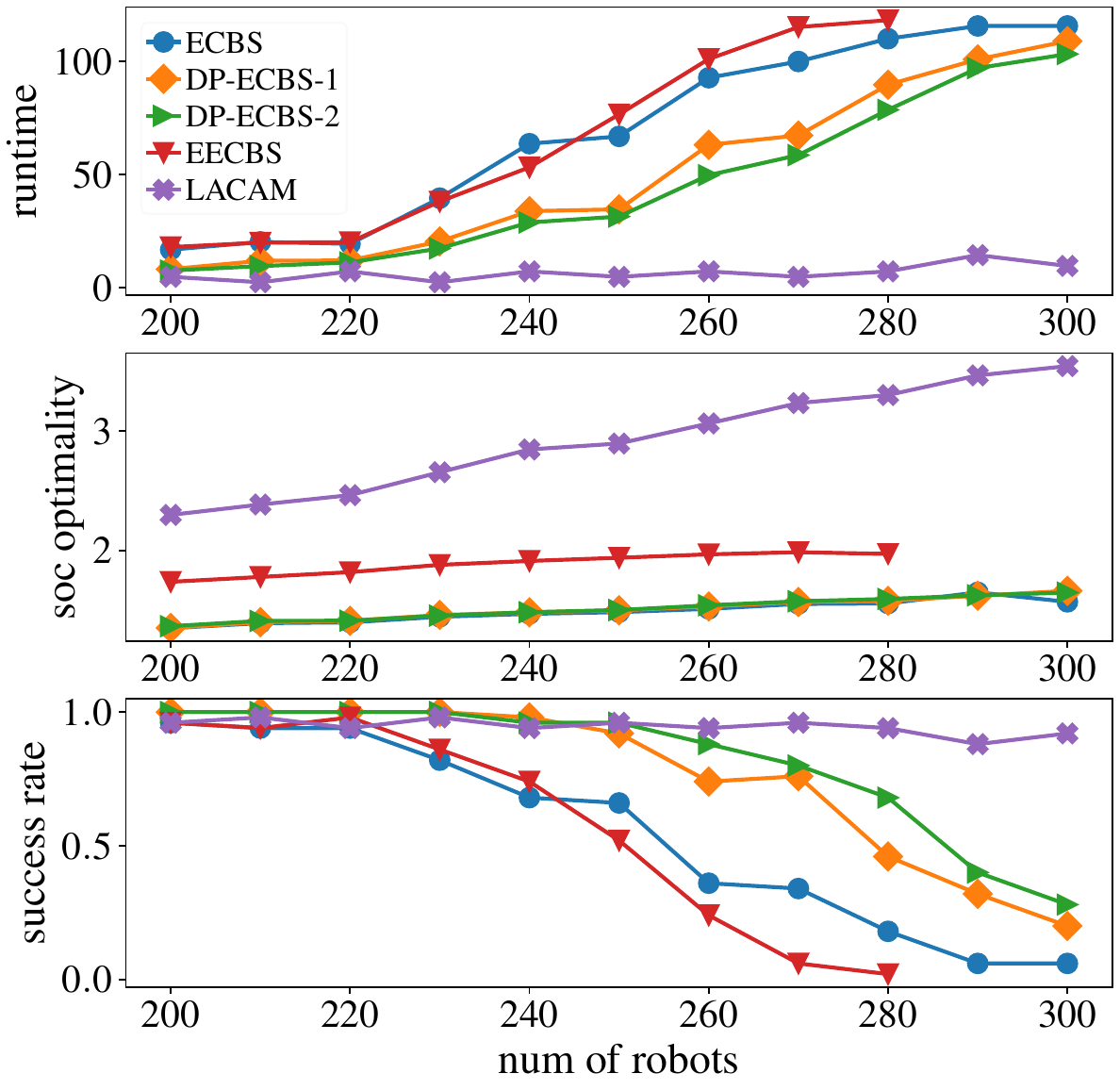}
        \put(-218, 22){\includegraphics[width=0.08\linewidth]{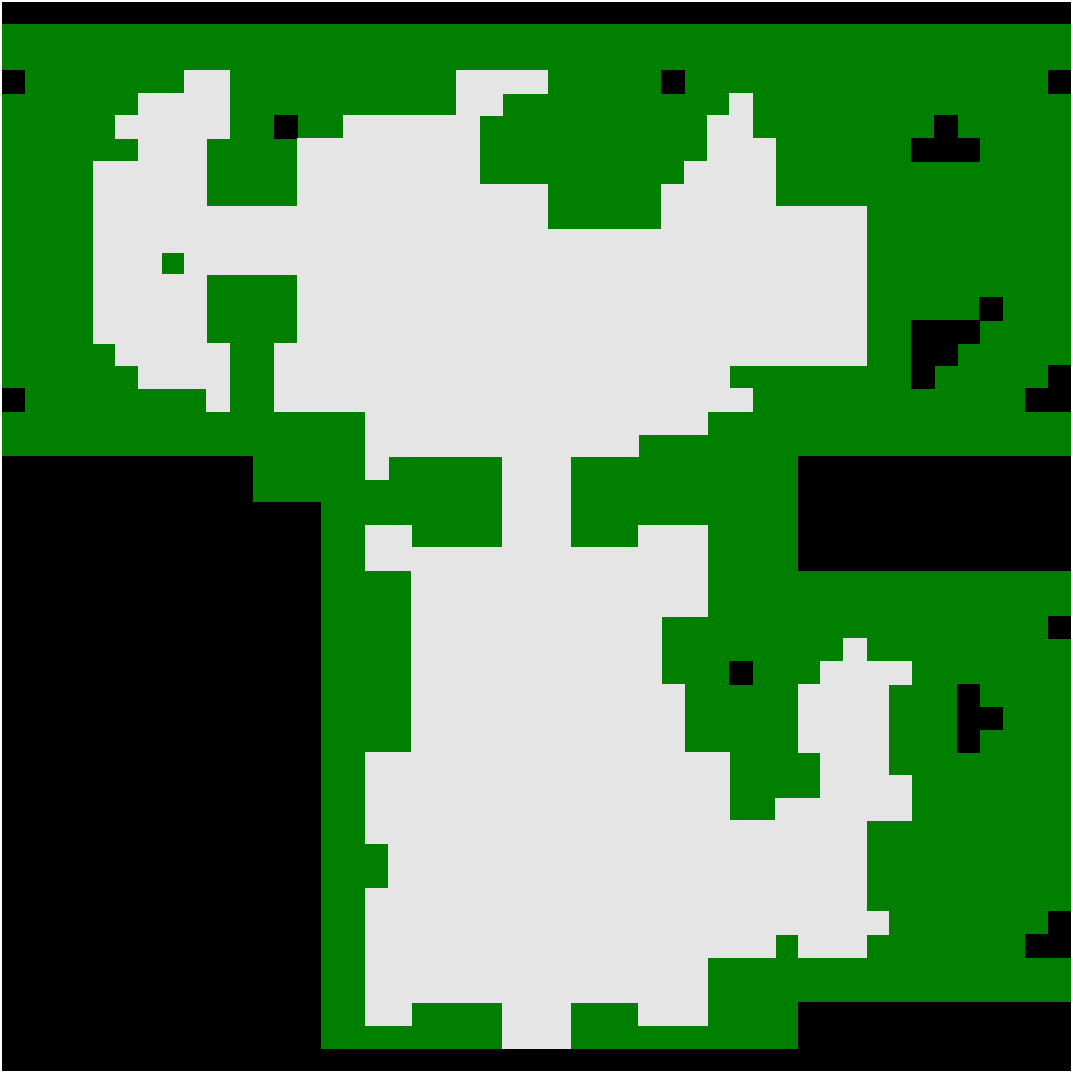}}
    \caption{Experimental results comparing DP-ECBS, ECBS, EECBS, and LaCAM on map orz201d. Metrics include computation time, success rate,  and SOC optimality.}
    \label{fig:orz201d}
\end{figure}

\begin{figure}[h!]
    \centering
    \includegraphics[width=1\linewidth]{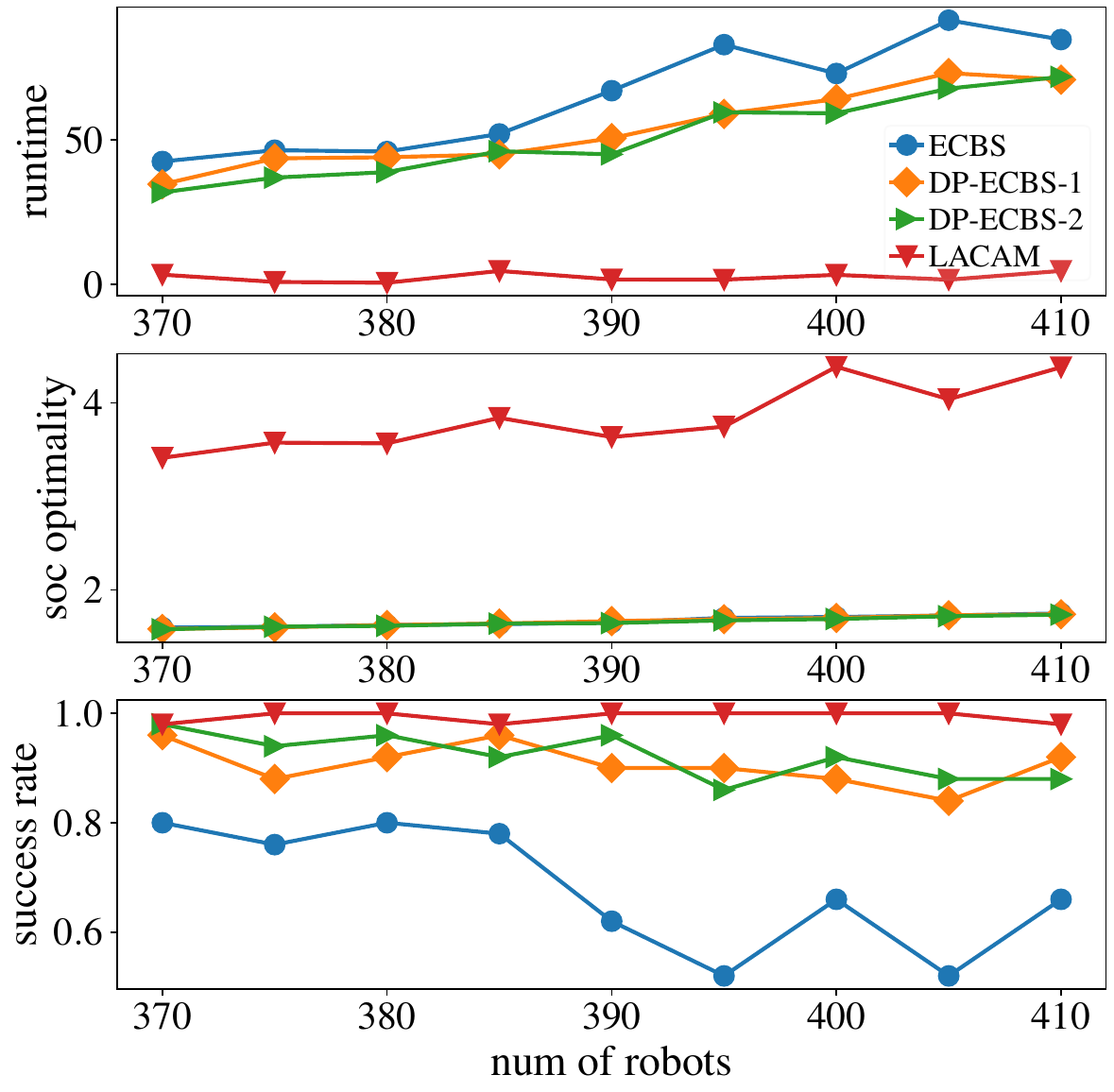}
    \put(-218, 22){\includegraphics[width=0.08\linewidth]{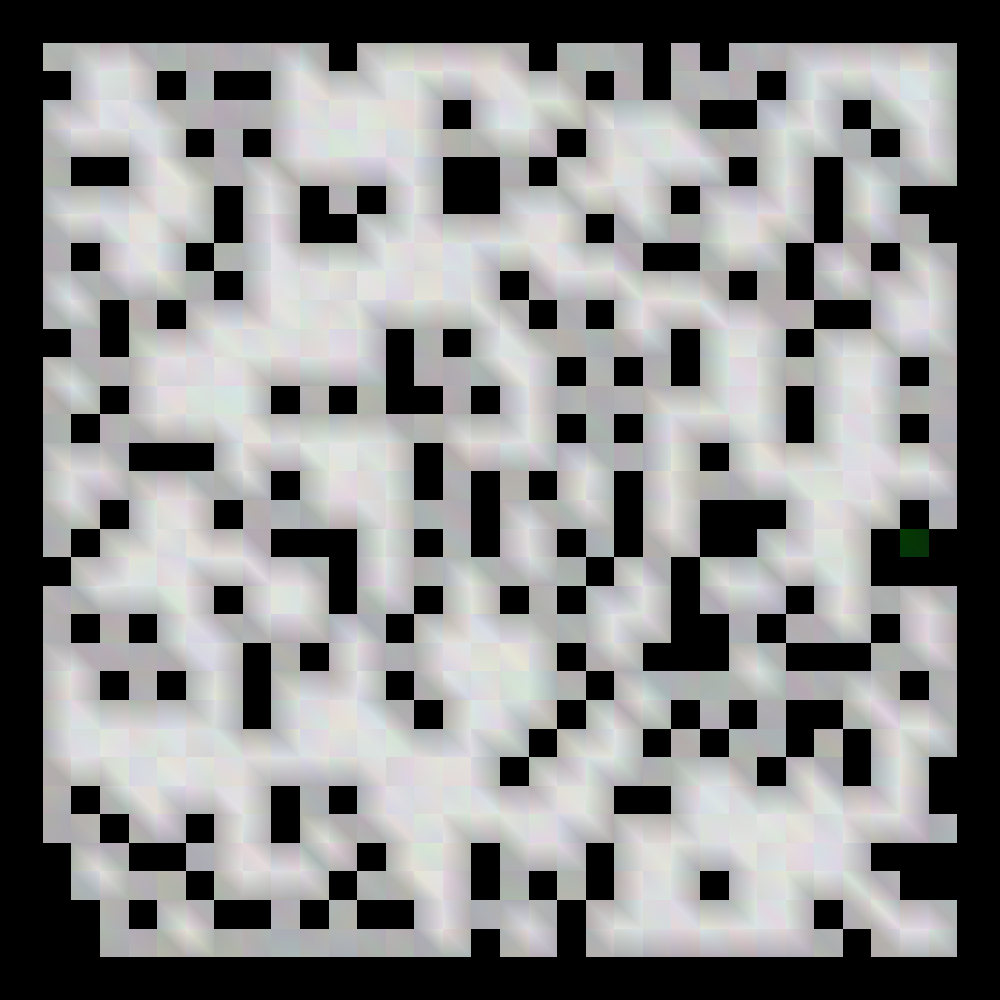}}
    \caption{Experimental results comparing DP-ECBS, ECBS, EECBS, and LaCAM on map random-32-32-20. Metrics include computation time, success rate,  and SOC optimality.}
    \label{fig:random32}
\end{figure}

\begin{figure}[h!]
    \centering
    \includegraphics[width=1\linewidth]{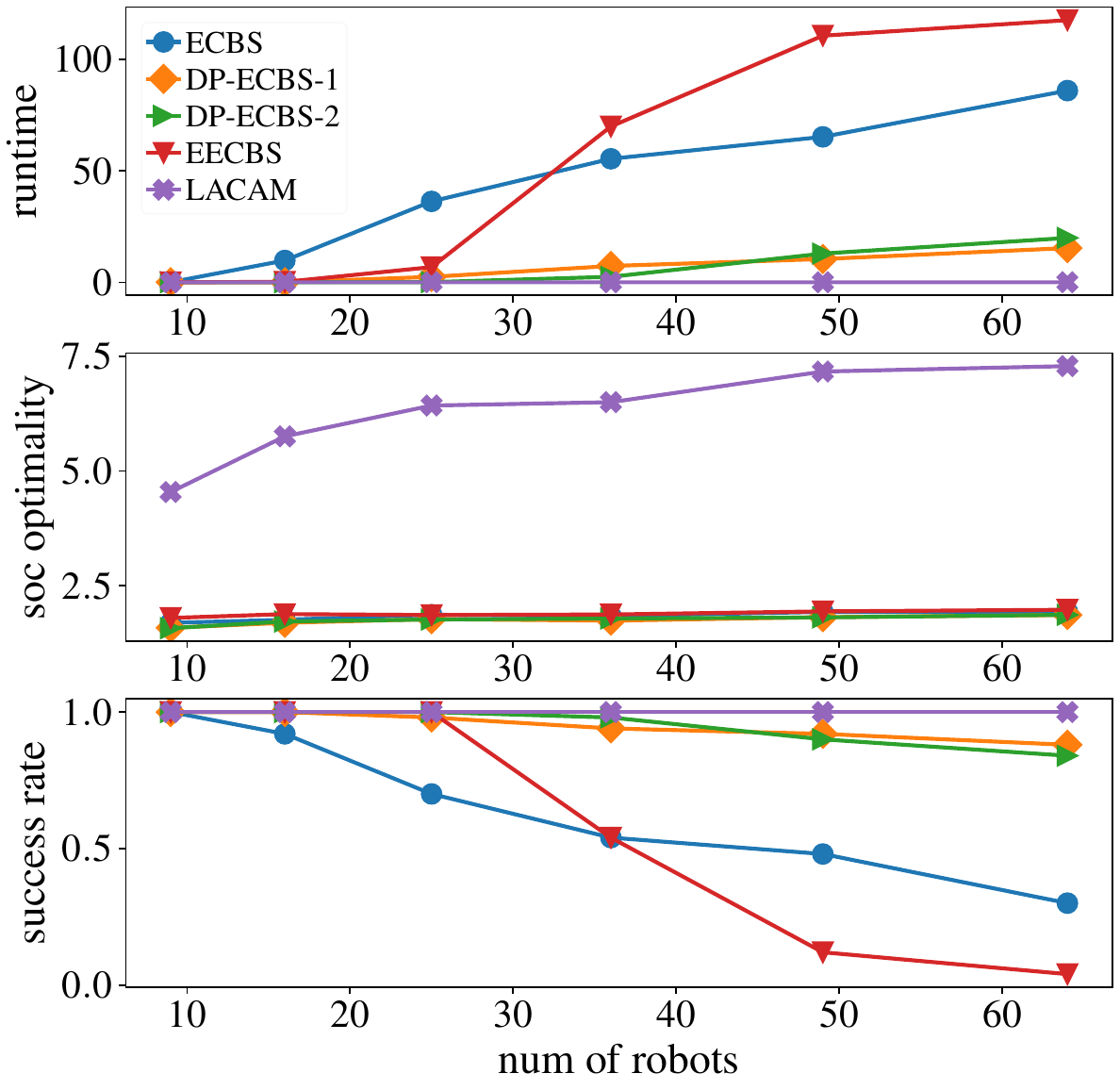}
    \put(-217, 23){\includegraphics[width=0.08\linewidth]{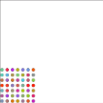}}
    \caption{Experimental results comparing DP-ECBS, ECBS, EECBS, and LaCAM on rearranging robots on a $60 \times 60$ obstacle-free map (an example instance of which is shown in Fig.~\ref{fig:example_rearrangement}. Metrics include computation time, success rate,  and SOC optimality.}
    \label{fig:cornerdense}
\end{figure}

\begin{figure}[h!]
    \centering
    \includegraphics[width=1\linewidth]{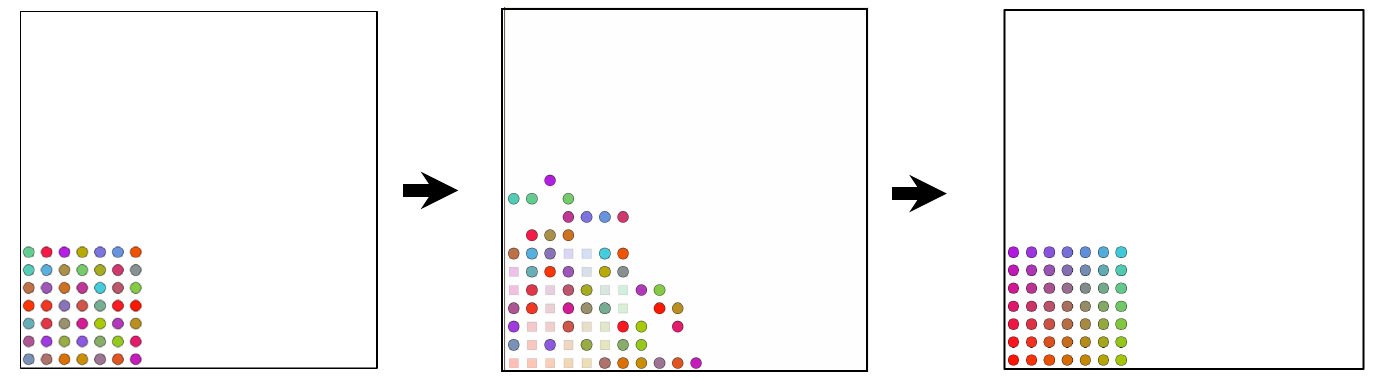}
    \caption{An example instance of dense multi-robot rearrangement.}
    \label{fig:example_rearrangement}
\end{figure}

Next, we explore the correlation between the number of threads employed and the efficiency of DP-ECBS with a random sending strategy. In this analysis, we use a deterministic approach in node transmission. The success rate data across three maps is illustrated in Fig.~\ref{fig:threads_number}. Table~\ref{tab:expansion_rate} shows the average high-level number of expansions for tested algorithms on map orz201d, including the failed instances.

Increasing the number of threads in parallel ECBS corresponds to an improved success rate, providing a greater opportunity to identify promising nodes. However, a trade-off exists as the synchronization overhead becomes more pronounced with a growing number of threads. Notably, a 16-threaded parallel ECBS yields a node expansion rate ten times greater than the ECBS. EECBS, with bypass and other heuristics, expands fewer high-level nodes.

\begin{table}[h!]
\vspace{3mm}
\begin{footnotesize}
  \centering
  \begin{tabular}{|c|c|c|c|c|c|c|}
    \hline
    \textbf{\;\;\# of robots\;\;} &200 &220 & 240 & 260 & 280 &300 \\
    \hline
    \textbf{ECBS} & 795&859& 2168&2444 & 3300 & 2685\\
    \hline
   \textbf{EECBS} & 324& 404&869 &904 &805  & 800\\
    \hline
   \textbf{2 theads}&1023 &1730 & 2358&4597 &4886 &4919  \\
    \hline
      \textbf{4 threads}& 1402 &2023 &5165 &6752&10003 &11562  \\
        \hline
            \textbf{8 threads} &2621 &3293 &7600 &11214 &14803 &17178 \\
            \hline
            \textbf{16 threads} &4627 &6504 &15621 &19320 &22789 &25634 \\
            \hline
  \end{tabular}
  \caption{Average number of expanded nodes on orz201d. }
  \label{tab:expansion_rate}
\end{footnotesize}
\end{table}

\begin{figure}[h!]
    \centering
    \includegraphics[width=1\linewidth]{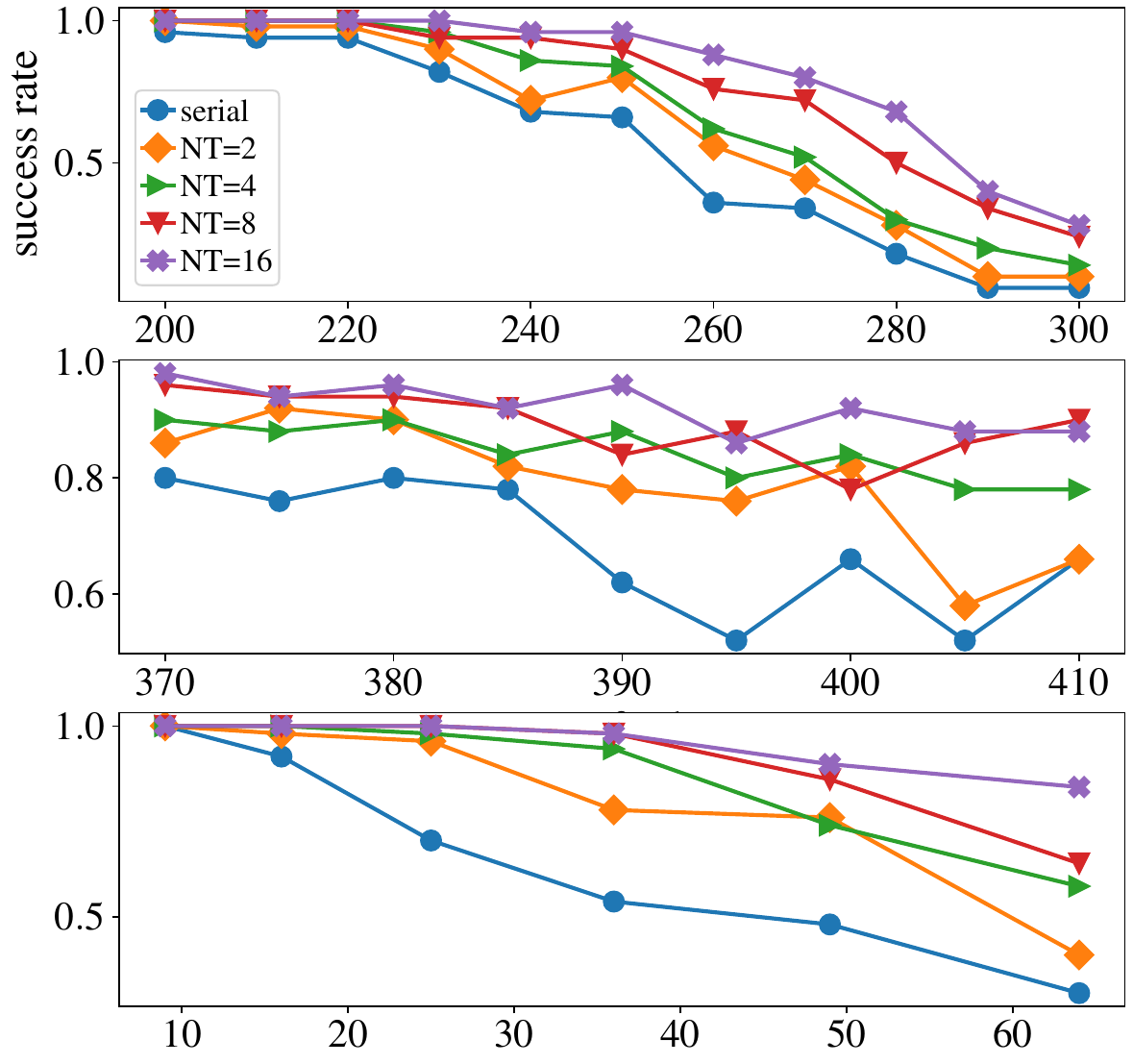}
    \caption{Success rate for DP-ECBS with different number of threads on the three maps. Left: orz201d, Middle: random-32-32-20, Right: empty grids.}
    \label{fig:threads_number}
\end{figure}

\subsection{Performance of PB-ECBS on Large Maps}
We assessed the effectiveness of PB-ECBS using the warehouse-20-40-10-2-2 characterized (size: $340\times164$) and Shanghai-0-256 (size: $256\times 256$) \cite{stern2019mapf}. The evaluation involves testing each map and value of $n$ by generating 50 random instances with a suboptimality bound set at 2.0. The comparison is made among the ECBS algorithm with BP, EECBS, and LaCAM.

The results, as depicted in Fig.~\ref{fig:warehouse_plot}-\ref{fig:shanghai_plot}, highlight the performance of parallel ECBS across different thread counts, juxtaposed with ECBS and EECBS. In this context, $nt=2$ signifies the utilization of 2 threads for parallel acceleration. The findings demonstrate that the parallel algorithm effectively reduces runtime by concurrently addressing and counting conflicts.
\begin{figure}[h!]
    \centering
    \includegraphics[width=1\linewidth]{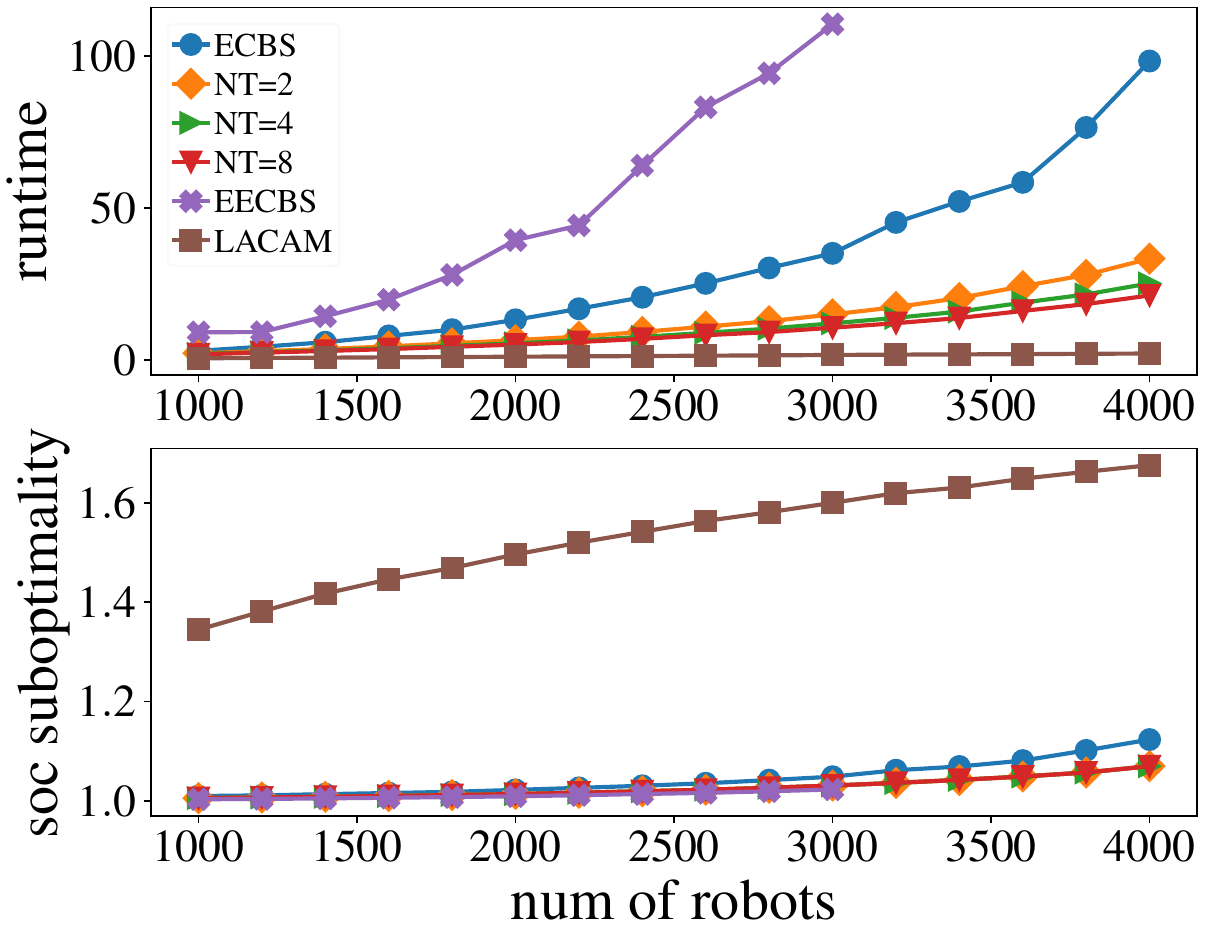}
      \put(-215, 75){\includegraphics[width=0.08\linewidth]{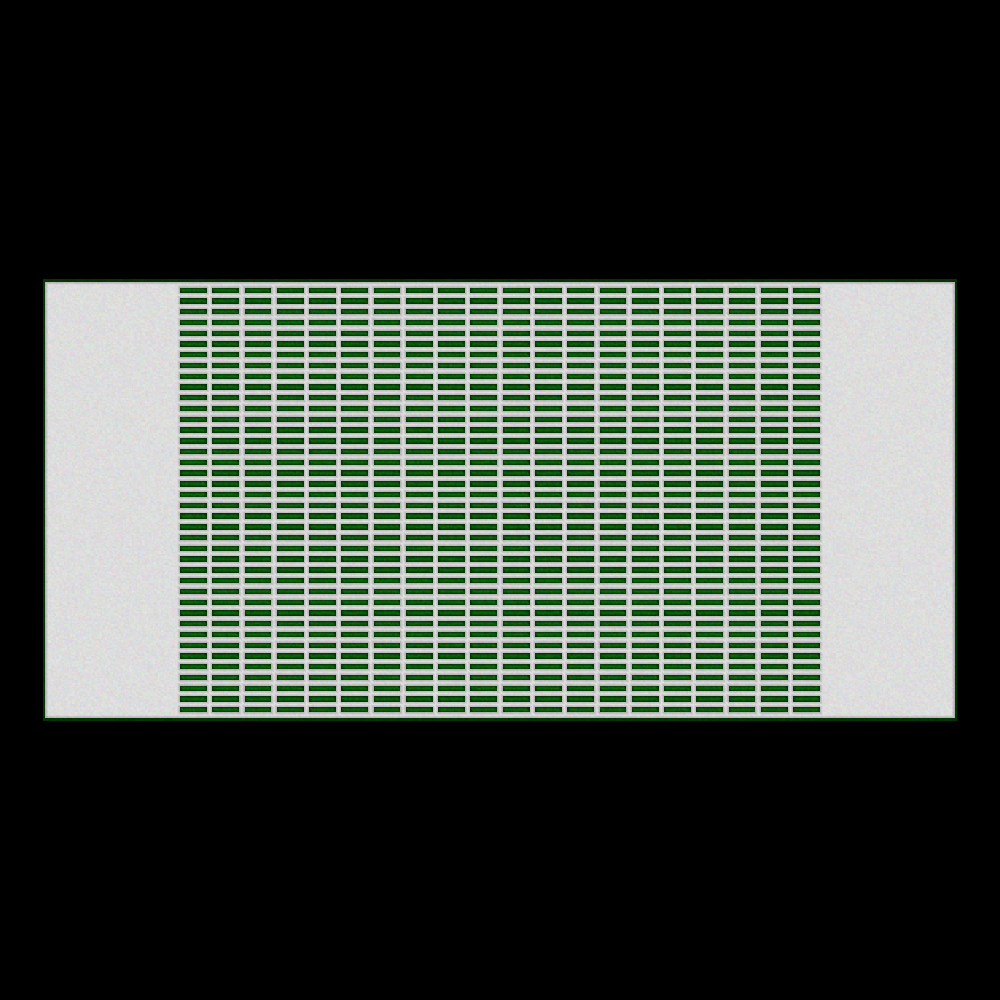}}
    \caption{Experimental results comparing PB-ECBS (1-8 threads), EECBS, and LaCAM on map warehouse-20-40-10-2-2. Metrics include computation time and SOC optimality.}
    \label{fig:warehouse_plot}
\end{figure}
\begin{figure}[h!]
    \centering
    \includegraphics[width=1\linewidth]{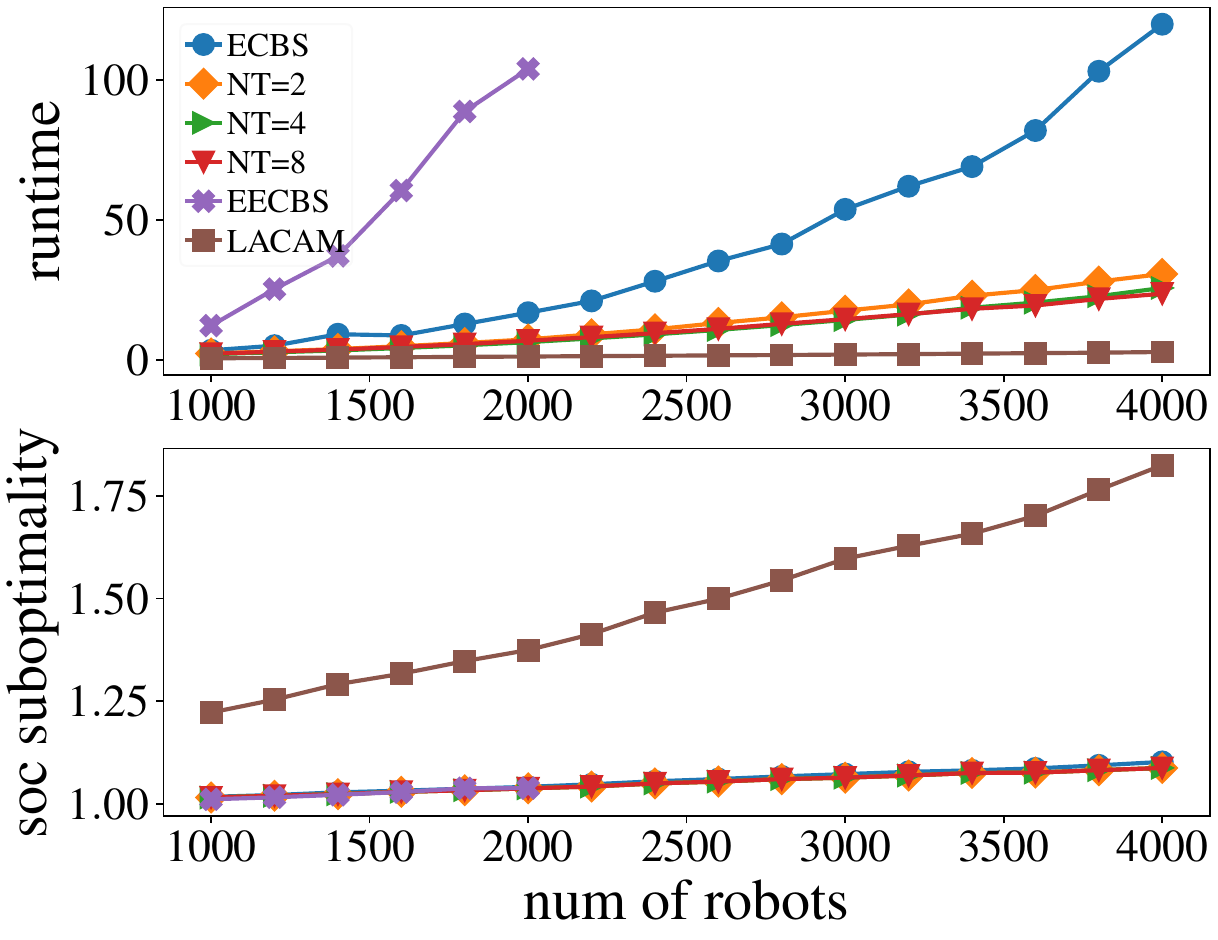}
     \put(-210, 75){\includegraphics[width=0.08\linewidth]{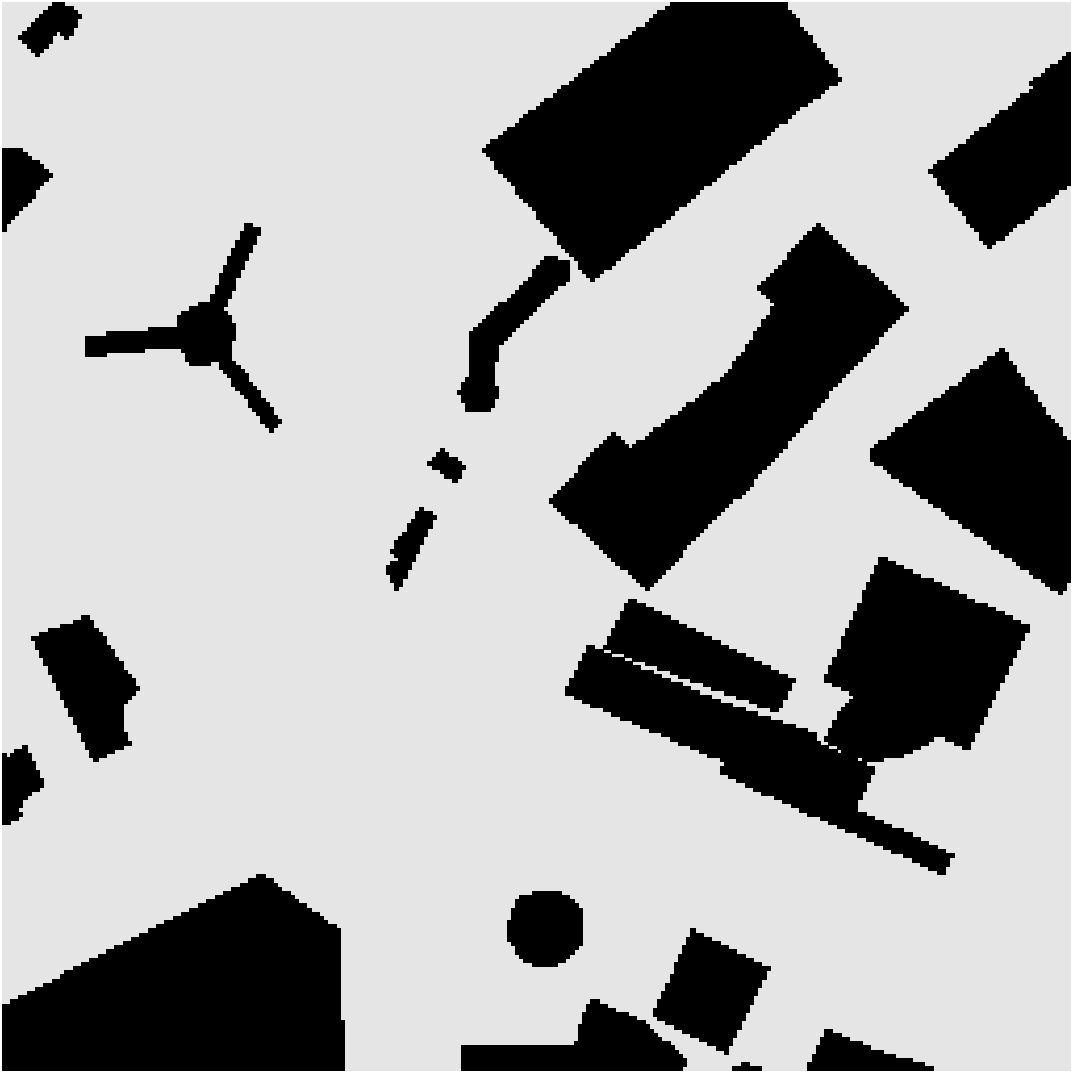}}
    \caption{Experimental results comparing PB-ECBS (1-8 threads), EECBS, and LaCAM on map Shanghai-0-256. Metrics include computation time and SOC optimality.}
    \label{fig:shanghai_plot}
\end{figure}
EECBS exhibits scalability up to approximately 3000 robots, while PB-ECBS can solve problems with 4000 robots in approximately 20 seconds. This results in a notable speedup of $2\times$-$4\times$ compared to ECBS, maintaining a suboptimality ratio of approximately 1.0-1.1. LaCAM achieves fast problem resolution within 2-3 seconds, showcasing excellent scalability. However, its optimality lags significantly behind conflict-best search algorithms.
We note that the actual speedup gain of multi-threading is much less than the ideal value, especially when the number of threads is large, primarily due to a portion of the sequential part that cannot be parallelized.

\section{Conclusion and Discussions}\label{sec:conclusion}
In conclusion, our work introduces simple yet effective parallel techniques aimed at improving the performance of conflict-based search algorithms. We address two main challenges: (1) robots may interact strongly on small- to medium-sized maps and (2) slow node expansion in large-scale instances, and present targeted, innovative parallelization strategies for each challenge separately. We note that our targeted approach can be readily applied in practice because the map size and robot density are generally known \emph{a priori}.

When robots strongly interact (e.g., when robots are highly concentrated), our decentralized parallel algorithm, DP-ECBS, enhances solution discovery by simultaneously exploring multiple branches, mitigating the impact of exponential node expansion. Conversely, for large maps with low to medium robot density, our proposed parallel techniques, PB-ECBS, focus on accelerating node expansion and conflict resolution.
While our contributions bring measurable progress towards more efficient \mpp solutions, it is also crucial to acknowledge the limitations of our methods. Parallel node expansion used by DP-ECBS proves less effective on sparse instances, and PB-ECBS is less efficient for dense instances with conservative suboptimality bounds. Large-scale strongly-coupled instances continue to pose challenges for both parallel methods.
Furthermore, the absence of theories on how to determine the strength of correlation in an instance remains a gap in our understanding.
Future work will involve refining these strategies, exploring adaptive approaches, and extending our methods to dynamic environments for comprehensive applicability.

\bibliographystyle{formatting/IEEEtran}
\bibliography{bib/all}

\begin{thebibliography}{10}
\providecommand{\url}[1]{#1}
\csname url@rmstyle\endcsname
\providecommand{\newblock}{\relax}
\providecommand{\bibinfo}[2]{#2}
\providecommand\BIBentrySTDinterwordspacing{\spaceskip=0pt\relax}
\providecommand\BIBentryALTinterwordstretchfactor{4}
\providecommand\BIBentryALTinterwordspacing{\spaceskip=\fontdimen2\font plus
\BIBentryALTinterwordstretchfactor\fontdimen3\font minus \fontdimen4\font\relax}
\providecommand\BIBforeignlanguage[2]{{%
\expandafter\ifx\csname l@#1\endcsname\relax
\typeout{** WARNING: IEEEtran.bst: No hyphenation pattern has been}%
\typeout{** loaded for the language `#1'. Using the pattern for}%
\typeout{** the default language instead.}%
\else
\language=\csname l@#1\endcsname
\fi
#2}}

\bibitem{YuLav13AAAI}
J.~Yu and S.~M. LaValle, ``Structure and intractability of optimal multi-robot path planning on graphs,'' in \emph{Proceedings AAAI National Conference on Artificial Intelligence}, 2013, pp. 1444--1449.

\bibitem{Sur10}
P.~Surynek, ``An optimization variant of multi-robot path planning is intractable,'' in \emph{Proceedings AAAI National Conference on Artificial Intelligence}, 2010, pp. 1261--1263.

\bibitem{Yu2015IntractabilityPlanar}
J.~Yu, ``Intractability of optimal multi-robot path planning on planar graphs,'' \emph{IEEE Robotics and Automation Letters}, vol.~1, no.~1, pp. 33--40, 2016.

\bibitem{WurDanMou08}
P.~R. Wurman, R.~D'Andrea, and M.~Mountz, ``Coordinating hundreds of cooperative, autonomous vehicles in warehouses,'' \emph{AI Magazine}, vol.~29, no.~1, pp. 9--19, 2008.

\bibitem{guizzo2008three}
E.~Guizzo, ``Three engineers, hundreds of robots, one warehouse,'' \emph{IEEE spectrum}, vol.~45, no.~7, pp. 26--34, 2008.

\bibitem{Guo2024WellConnectedSA}
T.~Guo and J.~Yu, ``Well-connected set and its application to multi-robot path planning,'' \emph{ArXiv}, vol. abs/2402.11766, 2024.

\bibitem{PodSuk04}
S.~Poduri and G.~S. Sukhatme, ``Constrained coverage for mobile sensor networks,'' in \emph{Proceedings IEEE International Conference on Robotics \& Automation}, 2004.

\bibitem{SmiEgeHow08}
B.~Smith, M.~Egerstedt, and A.~Howard, ``Automatic generation of persistent formations for multi-agent networks under range constraints,'' \emph{ACM/Springer Mobile Networks and Applications Journal}, vol.~14, no.~3, pp. 322--335, June 2009.

\bibitem{Guo2024DecentralizedLP}
T.~Guo and J.~Yu, ``Decentralized lifelong path planning for multiple ackerman car-like robots,'' \emph{ArXiv}, vol. abs/2402.11767, 2024.

\bibitem{Guo2023TowardEP}
------, ``Toward efficient physical and algorithmic design of automated garages,'' \emph{2023 IEEE International Conference on Robotics and Automation (ICRA)}, pp. 1364--1370, 2023.

\bibitem{cheein2013agricultural}
F.~A.~A. Cheein and R.~Carelli, ``Agricultural robotics: Unmanned robotic service units in agricultural tasks,'' \emph{IEEE industrial electronics magazine}, vol.~7, no.~3, pp. 48--58, 2013.

\bibitem{RusDonJen95}
D.~Rus, B.~Donald, and J.~Jennings, ``Moving furniture with teams of autonomous robots,'' in \emph{Proceedings IEEE/RSJ International Conference on Intelligent Robots \& Systems}, 1995, pp. 235--242.

\bibitem{preiss2017crazyswarm}
J.~A. Preiss, W.~H{\"o}nig, G.~S. Sukhatme, and N.~Ayanian, ``Crazyswarm: A large nano-quadcopter swarm,'' in \emph{IEEE Int. Conf. on Robotics and Automation (ICRA)}, 2017.

\bibitem{KorMilSpi84}
D.~Kornhauser, G.~Miller, and P.~Spirakis, ``Coordinating pebble motion on graphs, the diameter of permutation groups, and applications,'' in \emph{Proceedings IEEE Symposium on Foundations of Computer Science}, 1984, pp. 241--250.

\bibitem{ErdLoz86}
M.~A. Erdmann and T.~Lozano-P\'erez, ``On multiple moving objects,'' in \emph{Proceedings IEEE International Conference on Robotics \& Automation}, 1986, pp. 1419--1424.

\bibitem{LavHut98b}
S.~M. LaValle and S.~A. Hutchinson, ``Optimal motion planning for multiple robots having independent goals,'' \emph{IEEE Transactions on Robotics \& Automation}, vol.~14, no.~6, pp. 912--925, Dec. 1998.

\bibitem{GuoPar02}
Y.~Guo and L.~E. Parker, ``A distributed and optimal motion planning approach for multiple mobile robots,'' in \emph{Proceedings IEEE International Conference on Robotics \& Automation}, 2002, pp. 2612--2619.

\bibitem{YuLav16TRO}
J.~Yu and S.~M. LaValle, ``Optimal multi-robot path planning on graphs: Complete algorithms and effective heuristics,'' \emph{IEEE Transactions on Robotics}, vol.~32, no.~5, pp. 1163--1177, 2016.

\bibitem{boyarski2015icbs}
E.~Boyarski, A.~Felner, R.~Stern, G.~Sharon, O.~Betzalel, D.~Tolpin, and E.~Shimony, ``Icbs: The improved conflict-based search algorithm for multi-agent pathfinding,'' in \emph{Eighth Annual Symposium on Combinatorial Search}, 2015.

\bibitem{cohen2016improved}
L.~Cohen, T.~Uras, T.~Kumar, H.~Xu, N.~Ayanian, and S.~Koenig, ``Improved bounded-suboptimal multi-agent path finding solvers,'' in \emph{International Joint Conference on Artificial Intelligence}, 2016.

\bibitem{barer2014suboptimal}
M.~Barer, G.~Sharon, R.~Stern, and A.~Felner, ``Suboptimal variants of the conflict-based search algorithm for the multi-agent pathfinding problem,'' in \emph{Seventh Annual Symposium on Combinatorial Search}, 2014.

\bibitem{li2021eecbs}
J.~Li, W.~Ruml, and S.~Koenig, ``Eecbs: A bounded-suboptimal search for multi-agent path finding,'' in \emph{Proceedings of the AAAI Conference on Artificial Intelligence (AAAI)}, 2021.

\bibitem{surynek2012towards}
P.~Surynek, ``Towards optimal cooperative path planning in hard setups through satisfiability solving,'' in \emph{Pacific Rim International Conference on Artificial Intelligence}.\hskip 1em plus 0.5em minus 0.4em\relax Springer, 2012, pp. 564--576.

\bibitem{erdem2013general}
E.~Erdem, D.~G. Kisa, U.~{\"O}ztok, and P.~Schueller, ``A general formal framework for pathfinding problems with multiple agents.'' in \emph{AAAI}, 2013.

\bibitem{sharon2013increasing}
G.~Sharon, R.~Stern, M.~Goldenberg, and A.~Felner, ``The increasing cost tree search for optimal multi-agent pathfinding,'' \emph{Artificial Intelligence}, vol. 195, pp. 470--495, 2013.

\bibitem{sharon2015conflict}
G.~Sharon, R.~Stern, A.~Felner, and N.~R. Sturtevant, ``Conflict-based search for optimal multi-agent pathfinding,'' \emph{Artificial Intelligence}, vol. 219, pp. 40--66, 2015.

\bibitem{wagner2015subdimensional}
G.~Wagner and H.~Choset, ``Subdimensional expansion for multirobot path planning,'' \emph{Artificial Intelligence}, vol. 219, pp. 1--24, 2015.

\bibitem{li2019improved}
J.~Li, A.~Felner, E.~Boyarski, H.~Ma, and S.~Koenig, ``Improved heuristics for multi-agent path finding with conflict-based search.'' in \emph{IJCAI}, vol. 2019, 2019, pp. 442--449.

\bibitem{li2019disjoint}
J.~Li, D.~Harabor, P.~J. Stuckey, A.~Felner, H.~Ma, and S.~Koenig, ``Disjoint splitting for multi-agent path finding with conflict-based search,'' in \emph{Proceedings of the international conference on automated planning and scheduling}, vol.~29, 2019, pp. 279--283.

\bibitem{silver2005cooperative}
D.~Silver, ``Cooperative pathfinding.'' \emph{AIIDE}, vol.~1, pp. 117--122, 2005.

\bibitem{okumura2019priority}
K.~Okumura, M.~Machida, X.~D{\'e}fago, and Y.~Tamura, ``Priority inheritance with backtracking for iterative multi-agent path finding,'' in \emph{International Joint Conference on Artificial Intelligence}, 2019.

\bibitem{okumura2023lacam}
K.~Okumura, ``Lacam: Search-based algorithm for quick multi-agent pathfinding,'' in \emph{Proceedings of the AAAI Conference on Artificial Intelligence}, vol.~37, no.~10, 2023, pp. 11\,655--11\,662.

\bibitem{han2019ddm}
S.~D. Han and J.~Yu, ``Ddm: Fast near-optimal multi-robot path planning using diversified-path and optimal sub-problem solution database heuristics,'' \emph{IEEE Robotics and Automation Letters}, vol.~5, pp. 1350--1357, 2019.

\bibitem{luna2011push}
R.~J. Luna and K.~E. Bekris, ``Push and swap: Fast cooperative path-finding with completeness guarantees,'' in \emph{Twenty-Second International Joint Conference on Artificial Intelligence}, 2011.

\bibitem{GuoFenYu22IROS}
T.~Guo, S.~W. Feng, and J.~Yu, ``{Polynomial Time Near-Time-Optimal Multi-Robot Path Planning in Three Dimensions with Applications to Large-Scale UAV Coordination},'' in \emph{2022 IEEE/RSJ International Conference on Intelligent Robots and Systems (IROS)}, 2022.

\bibitem{guo2022sub}
T.~Guo and J.~Yu, ``{Sub-1.5 Time-Optimal Multi-Robot Path Planning on Grids in Polynomial Time},'' in \emph{Proceedings of Robotics: Science and Systems}, New York City, NY, USA, June 2022.

\bibitem{yu2018constant}
J.~Yu, ``Constant factor time optimal multi-robot routing on high-dimensional grids,'' \emph{2018 Robotics: Science and Systems}, 2018.

\bibitem{evett1995massively}
M.~Evett, J.~Hendler, A.~Mahanti, and D.~Nau, ``Pra*: Massively parallel heuristic search,'' \emph{Journal of Parallel and Distributed Computing}, vol.~25, no.~2, pp. 133--143, 1995.

\bibitem{kishimoto2009scalable}
A.~Kishimoto, A.~Fukunaga, and A.~Botea, ``Scalable, parallel best-first search for optimal sequential planning,'' in \emph{Proceedings of the International Conference on Automated Planning and Scheduling}, vol.~19, 2009, pp. 201--208.

\bibitem{zhou2015massively}
Y.~Zhou and J.~Zeng, ``Massively parallel a* search on a gpu,'' in \emph{Proceedings of the AAAI conference on artificial intelligence}, vol.~29, no.~1, 2015.

\bibitem{li2020graph}
Q.~Li, F.~Gama, A.~Ribeiro, and A.~Prorok, ``Graph neural networks for decentralized multi-robot path planning,'' in \emph{2020 IEEE/RSJ International Conference on Intelligent Robots and Systems (IROS)}.\hskip 1em plus 0.5em minus 0.4em\relax IEEE, 2020, pp. 11\,785--11\,792.

\bibitem{sartoretti2019primal}
G.~Sartoretti, J.~Kerr, Y.~Shi, G.~Wagner, T.~S. Kumar, S.~Koenig, and H.~Choset, ``Primal: Pathfinding via reinforcement and imitation multi-agent learning,'' \emph{IEEE Robotics and Automation Letters}, vol.~4, no.~3, pp. 2378--2385, 2019.

\bibitem{guo2021spatial}
T.~Guo, S.~D. Han, and J.~Yu, ``Spatial and temporal splitting heuristics for multi-robot motion planning,'' in \emph{2021 IEEE International Conference on Robotics and Automation (ICRA)}, 2021, pp. 8009--8015.

\bibitem{yu2016optimal}
J.~Yu and S.~M. LaValle, ``Optimal multirobot path planning on graphs: Complete algorithms and effective heuristics,'' \emph{IEEE Transactions on Robotics}, vol.~32, no.~5, pp. 1163--1177, 2016.

\bibitem{lee2021}
H.~Lee, J.~Motes, M.~Morales, and N.~M. Amato, ``Parallel hierarchical composition conflict-based search for optimal multi-agent pathfinding,'' \emph{IEEE Robotics and Automation Letters}, vol.~6, no.~4, pp. 7001--7008, 2021.

\bibitem{pearl1982studies}
J.~Pearl and J.~H. Kim, ``Studies in semi-admissible heuristics,'' \emph{IEEE transactions on pattern analysis and machine intelligence}, no.~4, pp. 392--399, 1982.

\bibitem{fukunaga2018parallel}
A.~Fukunaga, A.~Botea, Y.~Jinnai, and A.~Kishimoto, ``Parallel a* for state-space search,'' \emph{Handbook of Parallel Constraint Reasoning}, pp. 419--455, 2018.

\bibitem{zobrist1990new}
A.~L. Zobrist, ``A new hashing method with application for game playing,'' \emph{ICGA Journal}, vol.~13, no.~2, pp. 69--73, 1990.

\bibitem{boyrasky2015don}
E.~Boyrasky, A.~Felner, G.~Sharon, and R.~Stern, ``Don't split, try to work it out: Bypassing conflicts in multi-agent pathfinding,'' in \emph{Proceedings of the International Conference on Automated Planning and Scheduling}, vol.~25, 2015, pp. 47--51.

\bibitem{stern2019mapf}
R.~Stern, N.~R. Sturtevant, A.~Felner, S.~Koenig, H.~Ma, T.~T. Walker, J.~Li, D.~Atzmon, L.~Cohen, T.~K.~S. Kumar, E.~Boyarski, and R.~Bartak, ``Multi-agent pathfinding: Definitions, variants, and benchmarks,'' \emph{Symposium on Combinatorial Search (SoCS)}, pp. 151--158, 2019.

\bibitem{Guo2023EfficientHF}
T.~Guo and J.~Yu, ``Efficient heuristics for multi-robot path planning in crowded environments,'' \emph{2023 IEEE/RSJ International Conference on Intelligent Robots and Systems (IROS)}, pp. 6749--6756, 2023.

\end{thebibliography}

\end{document}